\documentclass{article}

\usepackage{arxiv}

\usepackage[utf8]{inputenc} 
\usepackage[T1]{fontenc}    
\usepackage{hyperref}       
\usepackage{url}            
\usepackage{booktabs}       
\usepackage{amsfonts}       
\usepackage{nicefrac}       
\usepackage{microtype}      
\usepackage{lipsum}
\usepackage{graphicx}
\graphicspath{ {./images/} }
\usepackage{amssymb}
\usepackage{amsmath}

\usepackage{subcaption}
\usepackage{xcolor}
\usepackage{tabularx}
\usepackage[ruled,linesnumbered]{algorithm2e}
\usepackage{amsthm}
\newtheorem{lemma}{Lemma}[section]       
\theoremstyle{definition}                
\usepackage{nomencl}
\makenomenclature
\usepackage{etoolbox}
\renewcommand\nomgroup[1]{%
	\item[\bfseries
	\ifstrequal{#1}{R}{Roman Symbols}{%
		\ifstrequal{#1}{G}{Greek Symbols}{%
			\ifstrequal{#1}{A}{Abbreviations}{%
				\ifstrequal{#1}{S}{Subscripts}}}}%
	]}

\title{Analytical Extraction of Conditional Sobol' Indices via Basis Decomposition of Polynomial Chaos Expansions}

\author{
	Shijie Zhong \\
	School of Power and Energy\\
	Northwestern Polytechnical University\\
	Xi'an, Shanxi 710129 \\
	\texttt{zhongsj@mail.nwpu.edu.cn} \\
	\And
	Jiangfeng Fu \\
	School of Power and Energy\\
	Northwestern Polytechnical University\\
	Xi'an, Shanxi 710129 \\
	\texttt{fjf@nwpu.edu.cn} \\
}

\begin{document}
\maketitle
\begin{abstract}
In uncertainty quantification, evaluating sensitivity measures under specific conditions (i.e., conditional Sobol' indices) is essential for systems with parameterized responses, such as spatial fields or varying operating conditions. Traditional approaches often rely on point-wise modeling, which is computationally expensive and may lack consistency across the parameter space. This paper demonstrates that for a pre-trained global Polynomial Chaos Expansion (PCE) model, the analytical conditional Sobol' indices are inherently embedded within its basis functions. By leveraging the tensor-product property of PCE bases, we reformulate the global expansion into a set of analytical coefficient fields that depend on the conditioning variables. Based on the preservation of orthogonality under conditional probability measures, we derive closed-form expressions for conditional variances and Sobol' indices. This framework bypasses the need for repetitive modeling or additional sampling, transforming conditional sensitivity analysis into a purely algebraic post-processing step. Numerical benchmarks indicate that the proposed method ensures physical coherence and offers superior numerical robustness and computational efficiency compared to conventional point-wise approaches.
\end{abstract}


\section{Introduction}
Polynomial Chaos Expansion (PCE) has emerged as a cornerstone in uncertainty quantification (UQ) due to its rigorous mathematical foundation and computational efficiency. The defining advantage of PCE is its "analytical transparency": once the expansion coefficients are determined, the global statistical moments and variance-based Sobol' indices can be derived directly through algebraic operations on these coefficients, bypassing the need for expensive Monte Carlo sampling \cite{sudret2008global}. This spectral approach, rooted in the Wiener-Askey scheme \cite{Xiu2002}, exhibits superior convergence properties for model responses with sufficient regularity. In recent years, the introduction of sparse representation frameworks, ranging from Bayesian Compressive Sensing (BCS) \cite{ji2008bayesian, karimi2025bi} to greedy algorithms such as Orthogonal Matching Pursuit (OMP) \cite{tropp2007signal, Rezaiifar1995} and Least Angle Regression (LAR) \cite{Blatman2011AdaptiveSP}, has significantly enhanced its applicability to high-dimensional problems in complex engineering systems.

Building upon these theoretical strengths, PCE has been widely adopted to evaluate global performance metrics in various domains, such as rotating machinery \cite{8814745} and centrifugal pumps \cite{fracassi2022shape}. However, in many advanced applications, one is often interested not only in global scalars but also in the conditional behavior of the system, that is, how sensitivities evolve when a subset of input variables (conditioning variables) is fixed at specific values. A typical example is the analysis of spatially distributed stochastic fields, which are common in stochastic CFD problems \cite{SALEHI2017296, SALEHI2018183}. To date, the most prevalent strategy for such problems is point-wise modeling, where independent PCE surrogates are constructed at each discrete spatial location or condition. This framework has been applied to investigate uncertainties in fluid properties and geometric variations in blood pumps \cite{salehi2018flow, karimi2021stochastic}, providing insights into the robustness of hydraulic performance and hemocompatibility.

While straightforward, the point-wise approach is theoretically fragmented and computationally intensive \cite{migliorati2013approximation, sudret2007uncertainty}. It treats intrinsically coupled conditions as isolated black boxes, which often leads to a lack of analytical continuity and potential non-physical oscillations in the resulting sensitivity fields.

This paper addresses this gap by revealing that conditional Sobol' indices are inherently encoded within the basis functions of a single global PCE. We demonstrate that by exploiting the tensor-product nature of the PCE basis, a pre-trained global expansion can be analytically "projected" onto any conditional subspace. This process yields a set of analytical conditional coefficient fields, which are closed-form functions of the conditioning variables. Consequently, the conditional Sobol' indices are no longer estimated through repetitive local modeling but are extracted directly from the global coefficients via pure algebraic manipulation. This framework ensures both physical coherence across the entire conditional domain and a near-zero computational cost for post-processing, offering a more elegant and robust theoretical solution for conditional sensitivity analysis.

\section{Theoretical foundation of PCE}
To establish the analytical framework for conditional sensitivity analysis, the key components of the classical Polynomial Chaos Expansion (PCE), including its spectral representation, orthonormal basis construction, and regression-based solution, are briefly reviewed.

\subsection{PCE representation of the model response}
Consider an $M$-dimensional random vector $\boldsymbol{X}$, whose components are mutually independent and have a joint probability density function $f_{\boldsymbol{X}}$. Let the computational model $\mathcal{M}(\cdot)$ map the random vector of the input to the random variable of the output $Y = \mathcal{M}(\boldsymbol{X})$. Assuming $Y$ has a finite variance, based on PCE, $Y$ can be represented as:

\begin{equation}
	\label{PCE}
	Y=\mathcal{M} \left( \boldsymbol{X} \right) =\sum_{\boldsymbol{\alpha }\in \mathbb{N} ^M}{y_{\boldsymbol{\alpha }}\Psi _{\boldsymbol{\alpha }}\left( \boldsymbol{X} \right)}
\end{equation}
where $\boldsymbol{\alpha}$ is the multi-index, $y_{\boldsymbol{\alpha}}$ are the corresponding expansion coefficients, and $\Psi_{\boldsymbol{\alpha}}(\boldsymbol{X})$ are orthonormal multivariate polynomials with respect to $f_{\boldsymbol X}$. These polynomials are typically constructed by the tensor product of one-dimensional orthonormal polynomials:
\begin{equation}
	\Psi_{\boldsymbol{\alpha}}(\boldsymbol{X}) = \prod_{i=1}^M \phi_{\alpha_i}(x_i)
\end{equation}
which satisfy the orthonormality condition $\left\langle \Psi_{\boldsymbol{\alpha}}, \Psi_{\boldsymbol{\beta}} \right\rangle = \delta_{\boldsymbol{\alpha} \boldsymbol{\beta}}$, where $\delta_{\boldsymbol{\alpha} \boldsymbol{\beta}}$ is the multi-dimensional Kronecker symbol.

\begin{table}[htbp]
	\centering
	\caption{Correspondence between orthogonal polynomials and probability distributions}
		\begin{tabularx}{\textwidth}{
				>{\hsize=0.5\hsize}X
				>{\hsize=0.6\hsize}X
				>{\hsize=0.9\hsize}X}
			\hline
			\textbf{Polynomial} & \textbf{Distribution} & \textbf{Normalization Factor} $\gamma_k$\\
			\hline
			Hermite $H_k(x)$ & Gaussian $(-\infty,\infty)$ & $k!$ \\
			Laguerre $L_k^{(\alpha)}(x)$ & Gamma $[0,\infty)$ & ${\Gamma(k+\alpha+1)}\big/\left({k! \Gamma(\alpha+1)}\right)$ \\
			Legendre $P_k(x)$ & Uniform $[-1,1]$ & ${1}\big/\left({2k+1}\right)$ \\
			\hline
	\end{tabularx}
	\label{pce_mapping_consistent}
\end{table}

For different types of input variables, the one-dimensional polynomials basis $\phi_k(x)$ is constructed based on the corresponding probability density function \cite{xiu2002wiener}, as shown in Table \ref{pce_mapping_consistent}. Taking the uniform distribution as an example, these basis functions are defined as:
\begin{equation}
	\phi_k(x) = \frac{P_k(x)}{\sqrt{\gamma_k}}
\end{equation}
where $P_k(x)$ is the standard orthogonal Legendre polynomial and $\gamma_k$ is the normalization factor, satisfying $\left\langle \phi_i(x), \phi_j(x) \right\rangle = \delta_{ij}$.
	
Furthermore, The standard orthogonal Hermite, Laguerre, and Legendre polynomials can be calculated using the following recurrence formulas, respectively:
	
\begin{equation}
	H_{k+1}(x) = 2xH_k(x) - 2kH_{k-1}(x)
\end{equation}
\begin{equation}
	(k+1)L_{k+1}^{(\alpha)}(x) = (2k+\alpha+1-x)L_k^{(\alpha)}(x) -(k+\alpha)L_{k-1}^{(\alpha)}(x)
\end{equation}
\begin{equation}
	(k+1)P_{k+1}(x) = (2k+1)xP_k(x) - kP_{k-1}(x)
\end{equation}

In practical applications, for computational convenience, the PCE is usually limited such that the total polynomial order does not exceed $p$:
\begin{equation}
	\mathcal{A} ^{M,p}=\left\{ \boldsymbol{\alpha }\in \mathbb{N} ^M:\left| \boldsymbol{\alpha } \right|\leqslant p \right\}
\end{equation}

The number of terms, $P$, in the truncated series is:
\begin{equation}
	\label{Card}
	P=\mathrm{card}\mathcal{A} ^{M,p}=\left( \begin{array}{c}
		M+p\\
		p\\
	\end{array} \right) 
\end{equation}

The truncated PCE is then expressed as:
\begin{equation}
	\label{tunPCE}
	\mathcal{M} ^{PC}\left( \boldsymbol{X} \right) =\sum_{\boldsymbol{\alpha }\in \mathcal{A} ^{M,p}}{y_{\boldsymbol{\alpha }}\Psi _{\boldsymbol{\alpha }}\left( \boldsymbol{X} \right)}
\end{equation}

\subsection{Coefficient calculation for the PCE model}
For a given polynomial basis, the PCE coefficients $y_{\boldsymbol{\alpha}}$ can be obtained through various non-intrusive methods. The regression method is adopt here as it does not depend on specific quadrature points, offering greater flexibility in model calls \cite{migliorati2013approximation}.

To construct the regression form, the infinite series in Eq.\ref{PCE} is first truncated into a finite-dimensional form, which can be written as:
\begin{equation}
	Y=\mathcal{M} \left( \boldsymbol{X} \right) =\sum_{j=0}^{P-1}{y_j\Psi _j\left( \boldsymbol{X} \right)}+\varepsilon _P\equiv \boldsymbol{y}^{\top}\Psi \left( \boldsymbol{X} \right) +\varepsilon _P
	\label{tPCE}
\end{equation}
where $\varepsilon_P$ is the truncation error; $\boldsymbol{y} = (y_0,\cdots,y_{P-1})$ is the coefficient vector and ${\Psi}(\boldsymbol{X}) = (\Psi_0(\boldsymbol{X}),\cdots,\Psi_{P-1}(\boldsymbol{X}))^{\mathrm{T}}$ is the vector made up of all orthonormal polynomials.

The calculation of PCE coefficients is transformed into a least-squares problem, which theoretically aims to minimize the expected error:
\begin{equation}
	\hat{\boldsymbol{y}}=\mathrm{arg} \min \mathbb{E} \left[ \left( \boldsymbol{y}^{\top}\Psi \left( \boldsymbol{X} \right) -\mathcal{M} \left( \boldsymbol{X} \right) \right) ^2 \right] 
\end{equation}

Suppose that there are $N$ input samples: $\mathcal{X}= \{\boldsymbol{x}^{(1)},\cdots,\boldsymbol{x}^{(N)}\}^{\mathrm{T}}$ and the corresponding model responses $\mathcal{Y} = \{y^{(1)},\cdots,y^{(N)}\}^{\mathrm{T}}$. The solution form given by Ordinary Least Squares (OLS) regression is as follows:
\begin{equation}
	\hat{\boldsymbol{y}}=\left( \mathbf{A}^{\top}\mathbf{A} \right) ^{-1}\mathbf{A}^{\top}\mathcal{Y} 
	\label{OLS}
\end{equation}
where $\mathbf{A}$ is the design matrix: $A_{ij} = \Psi_j(\boldsymbol{x}^{(i)})$, $i=1,\cdots,N$, $j=0,\cdots,P-1$. This matrix contains the values of all the base polynomials evaluated at each sample.

\subsection{Classical PCE-based Sobol' Indices}
Consider the truncated PCE representation of a model output Eq.~\ref{tunPCE}, the unique Sobol' decomposition characterizes the model response as a sum of functions of increasing dimensions. 
	
The PCE terms can be reorganized according to the subset of input variables they involve. For any non-empty subset $\mathbf{u} \subset \{1, \dots, M\}$, define the associated index set:
\begin{equation}
	\mathcal{A}_{\mathbf{u}} = \left\{ \boldsymbol{\alpha} \in \mathcal{A} : \alpha_k \neq 0 \iff k \in \mathbf{u} \right\}
\end{equation}
	
Using this partition, the truncated PCE can be rewritten as a sum of contributions associated with different variable subsets:
\begin{equation}
	\mathcal{M} ^{PC} = y_{\mathbf{0}} + \sum_{\substack{\mathbf{u} \subset \{1,\dots,M\} \\ \mathbf{u} \neq \emptyset}} 
	\sum_{\boldsymbol{\alpha} \in \mathcal{A}_{\mathbf{u}}} 
		y_{\boldsymbol{\alpha}} \, \Psi_{\boldsymbol{\alpha}}(\boldsymbol{\xi})
\end{equation}
	
Due to the orthonormality of the polynomial basis, the total variance of the PCE model and the partial variance associated with a subset of variables $\mathbf{u}$ can be computed directly from the expansion coefficients as:
\begin{equation}
	\mathrm{Var}[\mathcal{M} ^{PC}] = \sum_{\substack{\boldsymbol{\alpha} \in \mathcal{A} \\ \boldsymbol{\alpha} \neq \mathbf{0}}} y_{\boldsymbol{\alpha}}^2
\end{equation}
\begin{equation}
	\mathrm{Var}[\mathcal{M} ^{PC}_{\mathbf{u}}] = \sum_{\boldsymbol{\alpha} \in \mathcal{A}_{\mathbf{u}}} y_{\boldsymbol{\alpha}}^2
\end{equation}
	
The Sobol' index corresponding to $\mathbf{u}$ is therefore obtained as:
\begin{equation}
	S_{\mathbf{u}} = \frac{V_{\mathbf{u}}}{V} 
	= \frac{\sum_{\boldsymbol{\alpha} \in \mathcal{A}_{\mathbf{u}}} y_{\boldsymbol{\alpha}}^2}
	{\sum_{\substack{\boldsymbol{\alpha} \in \mathcal{A} \\ \boldsymbol{\alpha} \neq \mathbf{0}}} y_{\boldsymbol{\alpha}}^2}
\end{equation}
which enables the direct computation of Sobol' indices from PCE coefficients without additional model evaluations.

While these global indices $S_{\mathbf{u}}$ provide a comprehensive overview of variable importance, they are static and cannot reflect how sensitivities vary under specific conditions or within localized sub-domains. This limitation motivates the development of the analytical conditional framework presented in the next section.
	
\subsection{Sparse PCE via Orthogonal Matching Pursuit}
\label{OMP}
In high-dimensional settings, the full PCE basis $\mathcal{A}^{M,p}$ may lead to a large number of terms $P$, resulting in high computational cost and potential overfitting. To address this issue, sparse regression techniques are commonly employed to identify a reduced subset of significant basis functions. In this work, Orthogonal Matching Pursuit (OMP) is adopted to construct a sparse PCE representation \cite{Rezaiifar1995}.
	
Starting from Eq.~\eqref{OLS}, the model response at the sample points can be written as the following identity:
\begin{equation}
	\mathcal{Y} = \mathbf{A}\hat{\boldsymbol{y}} + \boldsymbol{R}
\end{equation}
where $\boldsymbol{R}$ denotes the residual vector associated with the OLS approximation.
	
OMP constructs a sparse approximation of the coefficient vector $\hat{\boldsymbol{y}}$ by iteratively selecting the most relevant basis functions from the candidate index set $\mathcal{J} = \{0,1,\dots,P-1\}$. At iteration $n$, let $\mathcal{J}_n \subset \mathcal{J}$ denote the active set of selected indices, and let $\mathbf{A}^{(n)}$ be the submatrix of $\mathbf{A}$ formed by the corresponding columns.
	
The residual at iteration $n$ is defined as:
\begin{equation}
	\boldsymbol{R}_n = \mathcal{Y} - \mathbf{A}^{(n)} \hat{\boldsymbol{y}}^{(n)}
\end{equation}
where $\hat{\boldsymbol{y}}^{(n)}$ is obtained solving the Ordinary Least Squares (OLS) problem constrained to the active set $\mathcal{J}_n$. At each iteration, OMP selects the index of the basis function that is most correlated with the current residual:
\begin{equation}
	j_{n+1} = \arg\max_{j \in \mathcal{J} \setminus \mathcal{J}_n} \left|\left< \mathbf{A}_{\cdot,j}, \boldsymbol{R}_n \right>\right|
\end{equation}
where $\mathcal{J} \setminus \mathcal{J}_n$ denotes the set of candidate indices not yet included in the active set, and $\mathbf{A}_{\cdot,j}$ is the $j$-th column of the design matrix $\mathbf{A}$ representing the $j$-th basis function evaluated at all sample points. The selected index is then added to the active set:
\begin{equation}
		\mathcal{J}_{n+1} = \mathcal{J}_n \cup \{j_{n+1}\}
\end{equation}

Given the design matrix $\mathbf{A} \in \mathbb{R}^{N \times P}$ and the response vector $\mathcal{Y}$, OMP iteratively selects the basis column from $\mathbf{A}$ that exhibits the maximum absolute correlation with the current residual $\boldsymbol{R}_n$. Upon selecting a new basis index, the coefficients for all currently active bases are updated via OLS to ensure that the residual is orthogonal to the span of the selected columns. To prevent overfitting, especially in cases with limited samples or spatial noise, an early stopping criterion based on the relative residual norm is implemented.

\begin{algorithm}[H]
	\caption{OMP for Sparse PCE Reconstruction}
	\label{alg:omp_final}
	\KwIn{Design matrix $\mathbf{A} \in \mathbb{R}^{N \times P}$, response vector $\mathcal{Y}$, max terms $m$, tolerance $\delta$}
	\KwOut{Sparse coefficient vector $\hat{\boldsymbol{y}}$}
	\BlankLine
	
	Initialize: $\boldsymbol{R}_0 = \mathcal{Y}$, $\mathcal{J}_0 = \emptyset$, $\hat{\boldsymbol{y}} = \mathbf{0}$, $n = 0$ ;
	
	\While{$n < m$}{
		
		$\boldsymbol{c} = \left| \mathbf{A}^\top \boldsymbol{R}_n \right|$ \tcp*{Compute correlations}
		
		$j_{n+1} = \arg\max_{j \in \mathcal{J} \setminus \mathcal{J}_n} \boldsymbol{c}_j$ ;
		
		$\mathcal{J}_{n+1} = \mathcal{J}_n \cup \{ j_{n+1} \}$ \tcp*{Update active set}
		
		$\mathbf{A}^{(n+1)} = \mathbf{A}(:, \mathcal{J}_{n+1})$ ;
		
		$\hat{\boldsymbol{y}}^{(n+1)} = (\mathbf{A}^{(n+1)\top} \mathbf{A}^{(n+1)})^{-1} \mathbf{A}^{(n+1)\top} \mathcal{Y}$ \tcp*{OLS update}
		
		$\boldsymbol{R}_{n+1} = \mathcal{Y} - \mathbf{A}^{(n+1)} \hat{\boldsymbol{y}}^{(n+1)}$ \tcp*{Update residual}
		
		\If{$\|\boldsymbol{R}_{n+1}\|_2 / \|\mathcal{Y}\|_2 < \delta$}{
			\textbf{break} ;
		}
		
		$n = n + 1$ ;
	}
	
	$\hat{\boldsymbol{y}}(\mathcal{J}_n) = \hat{\boldsymbol{y}}^{(n)}$ ;
	
	\Return{$\hat{\boldsymbol{y}}$}
\end{algorithm}

\section{Analytical conditional Sobol' analysis via basis decomposition}

In many complex systems, the model response is parameterized by a subset of input variables $\mathbf{s}$, such as spatial coordinates, temporal indices, or operational parameters. Rather than constructing independent surrogates for each discrete value of $\mathbf{s}$, a more efficient approach is to treat both the conditioning variables $\mathbf{s}$ and the random input vector $\boldsymbol{\xi}$ as components of a joint augmented input vector $\boldsymbol{X}=\left( \mathbf{s},\boldsymbol{\xi } \right)^\mathrm{T}$. By doing so, a unified PCE model can be constructed through OLS regression (as described in Eq.\ref{OLS}), providing a continuous and integrated functional representation of the entire parametric stochastic field.

\subsection{Formulation of the analytical conditional PCE}

To extract the conditional sensitivity structure with respect to $\mathbf{s}$, the multi-index $\boldsymbol{\alpha}$ is partitioned into two disjoint subsets: $\boldsymbol{\alpha}_K$, which clusters indices related to the conditioning parameters $\mathbf{s}$; and $\boldsymbol{\alpha}_L$, which clusters the indices associated with the remaining uncertain variables $\boldsymbol{\xi}$. Leveraging the tensor-product nature of the PCE basis functions, the joint basis $\Psi_{\boldsymbol{\alpha}}(\boldsymbol{X})$ can be analytically factorized into a conditional form:
\begin{equation}
	\Psi _{\boldsymbol{\alpha }}\left( \boldsymbol{X} \right) =\Psi _{\boldsymbol{\alpha }}\left( \mathbf{s},\boldsymbol{\xi } \right) =\Psi _{\boldsymbol{\alpha }_K}\left( \mathbf{s} \right) \Psi _{\boldsymbol{\alpha }_L}\left( \boldsymbol{\xi } \right)
\end{equation}
By regrouping terms based on the stochastic multi-indices $\boldsymbol{\alpha}_L$, the global PCE model is reformulated into an analytical conditional representation. This transformation reveals that the influence of the conditioning variables $\mathbf{s}$ can be entirely encapsulated within a new set of varying coefficients. The formal mathematical properties of this decomposition are summarized in the following lemma.

\begin{lemma}[Orthogonality preservation for analytical conditional PCE]
	\label{lem:ortho}
	Given the basis factorization $\Psi_{\boldsymbol{\alpha}}(\mathbf{s}, \boldsymbol{\xi}) = \Psi_{\boldsymbol{\alpha}_K}(\mathbf{s}) \Psi_{\boldsymbol{\alpha}_L}(\boldsymbol{\xi})$, the global PCE model can be analytically reformulated into a conditional expansion:
	\begin{equation}
		\mathcal{M}^{PC}(\boldsymbol{\xi} | \mathbf{s}) = \sum_{\boldsymbol{\alpha}_L} c_{\boldsymbol{\alpha}_L}(\mathbf{s}) \Psi_{\boldsymbol{\alpha}_L}(\boldsymbol{\xi})
	\end{equation}
	where the parametric coefficient fields $c{\boldsymbol{\alpha}L}(\mathbf{s})$ are defined as:
	\begin{equation}
		c_{\boldsymbol{\alpha}_L}(\mathbf{s}) = \sum_{\boldsymbol{\alpha}_K} y_{\boldsymbol{\alpha}_K, \boldsymbol{\alpha}_L} \Psi_{\boldsymbol{\alpha}_K}(\mathbf{s})
	\end{equation}
	Crucially, for any fixed value of the conditioning variables $\mathbf{s}$, the basis functions $\Psi{\boldsymbol{\alpha}_L}(\boldsymbol{\xi})$ preserve their orthonormality with respect to the random vector $\boldsymbol{\xi}$.
\end{lemma}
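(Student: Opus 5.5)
The plan has two parts: an algebraic regrouping of the finite sum, then a product-measure argument for orthonormality. Independence of the components of $\boldsymbol{X}=(\mathbf{s},\boldsymbol{\xi})^{\mathrm{T}}$ is the key structural input.

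\textbf{Regrouping.} Start from the truncated expansion Eq.~\eqref{tunPCE} over the finite set $\mathcal{A}$. Write each multi-index as $\boldsymbol{\alpha}=(\boldsymbol{\alpha}_K,\boldsymbol{\alpha}_L)$ and use the tensor-product factorization $\Psi_{\boldsymbol{\alpha}}(\mathbf{s},\boldsymbol{\xi})=\Psi_{\boldsymbol{\alpha}_K}(\mathbf{s})\Psi_{\boldsymbol{\alpha}_L}(\boldsymbol{\xi})$. Here $\Psi_{\boldsymbol{\alpha}_K}$ and $\Psi_{\boldsymbol{\alpha}_L}$ are the products of the one-dimensional $\phi_{\alpha_i}$ over the respective coordinates. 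Let $\mathcal{L}=\{\boldsymbol{\alpha}_L:\exists\,\boldsymbol{\alpha}_K,\ (\boldsymbol{\alpha}_K,\boldsymbol{\alpha}_L)\in\mathcal{A}\}$ be the projection of $\mathcal{A}$ onto the $\boldsymbol{\xi}$-coordinates. Since the sum is finite, it can be reordered freely as a double sum: outer over $\boldsymbol{\alpha}_L\in\mathcal{L}$, inner over the fibre $\{\boldsymbol{\alpha}_K:(\boldsymbol{\alpha}_K,\boldsymbol{\alpha}_L)\in\mathcal{A}\}$. Factoring $\Psi_{\boldsymbol{\alpha}_L}(\boldsymbol{\xi})$ out of the inner sum gives exactly the stated $c_{\boldsymbol{\alpha}_L}(\mathbf{s})$ and the conditional expansion.

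For a sparse (OMP) model, the only change is to set the coefficients of inactive indices to zero. For a total-degree set, the fibre is $|\boldsymbol{\alpha}_K|\le p-|\boldsymbol{\alpha}_L|$. In both cases $c_{\boldsymbol{\alpha}_L}$ is a polynomial in $\mathbf{s}$, and hence an analytical field.

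\textbf{Orthonormality for fixed $\mathbf{s}$.} Because the components of $\boldsymbol{X}$ are mutually independent, $f_{\boldsymbol{X}}(\mathbf{s},\boldsymbol{\xi})=f_{\mathbf{s}}(\mathbf{s})f_{\boldsymbol{\xi}}(\boldsymbol{\xi})$. Therefore the conditional density of $\boldsymbol{\xi}$ given $\mathbf{s}$ is $f_{\boldsymbol{\xi}}$, independent of the value of $\mathbf{s}$. The conditional inner product is
\begin{equation*}
\langle\Psi_{\boldsymbol{\alpha}_L},\Psi_{\boldsymbol{\beta}_L}\rangle_{\mathbf{s}}=\int\Psi_{\boldsymbol{\alpha}_L}(\boldsymbol{\xi})\Psi_{\boldsymbol{\beta}_L}(\boldsymbol{\xi})f_{\boldsymbol{\xi}}(\boldsymbol{\xi})\,d\boldsymbol{\xi}.
\end{equation*}
By Fubini and the product structure, this splits into $\prod_{i\in L}\int\phi_{\alpha_i}(\xi_i)\phi_{\beta_i}(\xi_i)f_{\xi_i}(\xi_i)\,d\xi_i=\prod_{i\in L}\delta_{\alpha_i\beta_i}=\delta_{\boldsymbol{\alpha}_L\boldsymbol{\beta}_L}$. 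This uses the one-dimensional orthonormality $\langle\phi_i,\phi_j\rangle=\delta_{ij}$ from Section 2.

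Alternatively, the same result follows from the global relation $\langle\Psi_{\boldsymbol{\alpha}},\Psi_{\boldsymbol{\beta}}\rangle=\delta_{\boldsymbol{\alpha}\boldsymbol{\beta}}$ by taking $\boldsymbol{\alpha}_K=\boldsymbol{\beta}_K=\mathbf{0}$, since $\Psi_{\mathbf{0}}\equiv1$. Doing so shows that the $\boldsymbol{\xi}$-factor is orthonormal under $f_{\boldsymbol{\xi}}$.

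\textbf{Main obstacle.} The delicate point is that ``orthonormality with respect to $\boldsymbol{\xi}$ for fixed $\mathbf{s}$'' must be stated relative to the conditional measure. That measure coincides with the marginal $f_{\boldsymbol{\xi}}$ only because of the independence assumption. I would state this explicitly, and remark that treating a deterministic coordinate $\mathbf{s}$ as a random input with some product density is merely a device for building the global basis. The conditional statement does not depend on the choice of $f_{\mathbf{s}}$.

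As a corollary to record, the conditional mean is $c_{\mathbf{0}}(\mathbf{s})$ and the conditional variance is $\sum_{\boldsymbol{\alpha}_L\neq\mathbf{0}}c_{\boldsymbol{\alpha}_L}(\mathbf{s})^2$.
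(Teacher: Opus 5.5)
Your proposal is correct and follows the paper's route. The regrouping of the finite sum by $\boldsymbol{\alpha}_L$ is exactly the paper's argument. For orthonormality you supply what the paper only asserts (``inherently preserved''): mutual independence makes the conditional density of $\boldsymbol{\xi}$ given $\mathbf{s}$ equal to $f_{\boldsymbol{\xi}}$, and Fubini then reduces $\langle\Psi_{\boldsymbol{\alpha}_L},\Psi_{\boldsymbol{\beta}_L}\rangle$ to the one-dimensional relations $\langle\phi_i,\phi_j\rangle=\delta_{ij}$, so your version is the same proof with its key assumption made explicit.
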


\begin{proof}
	Starting from the PCE formulation, we substitute the factorized basis functions $\Psi_{\boldsymbol{\alpha}}(\mathbf{s}, \boldsymbol{\xi}) = \Psi_{\boldsymbol{\alpha}_K}(\mathbf{s}) \Psi_{\boldsymbol{\alpha}_L}(\boldsymbol{\xi})$ into the expansion:
	\begin{equation}
		\mathcal{M}^{PC}(\boldsymbol{\xi} | \mathbf{s}) = \sum_{\boldsymbol{\alpha}_L} \sum_{\boldsymbol{\alpha}_K} y_{\boldsymbol{\alpha}_K, \boldsymbol{\alpha}_L} \Psi_{\boldsymbol{\alpha}_K}(\mathbf{s}) \Psi_{\boldsymbol{\alpha}_L}(\boldsymbol{\xi})
	\end{equation}
	By invoking the linearity of the summation, the terms associated with the conditioning basis functions $\Psi{\boldsymbol{\alpha}K}(\mathbf{s})$ can be grouped:
	\begin{equation}
		\mathcal{M}^{PC}(\boldsymbol{\xi} | \mathbf{s}) = \sum_{\boldsymbol{\alpha}_L} \left[ \sum_{\boldsymbol{\alpha}_K} y_{\boldsymbol{\alpha}_K, \boldsymbol{\alpha}_L} \Psi_{\boldsymbol{\alpha}_K}(\mathbf{s}) \right] \Psi_{\boldsymbol{\alpha}_L}(\boldsymbol{\xi})
	\end{equation}
	By defining the term in the brackets as the parametric coefficient $c{\boldsymbol{\alpha}_L}(\mathbf{s})$, we obtain the conditional form of the expansion.
	
	Regarding the orthogonality, consider the inner product with respect to the random vector $\boldsymbol{\xi}$ for a given $\mathbf{s}$. Since the functions $\Psi_{\boldsymbol{\alpha}_L}(\boldsymbol{\xi})$ are elements of the original orthonormal basis set $\mathcal{A}$ restricted to the stochastic dimensions of $\boldsymbol{\xi}$, their orthonormality is inherently preserved under the conditional probability measure:
	\begin{equation}
		\langle \Psi_{\boldsymbol{\alpha}_L}, \Psi_{\boldsymbol{\beta}_L} \rangle_{\boldsymbol{\xi}} = \delta_{\boldsymbol{\alpha}_L \boldsymbol{\beta}_L}
	\end{equation}
\end{proof}

A key property of this analytical conditional representation (Lemma~\ref{lem:ortho}) is that the orthonormality of the polynomial basis is preserved with respect to the stochastic variables $\boldsymbol{\xi}$. This enables the direct computation of statistical moments conditioned on any given value of $\mathbf{s}$.Specifically, the conditional mean of the model response, which represents the deterministic trend of the stochastic field varying with $\mathbf{s}$, is directly given by the zero-order coefficient field:
\begin{equation}
	\mathbb{E} \left[ \mathcal{M}^{PC}(\boldsymbol{\xi}|\mathbf{s}) \right] = c_{\boldsymbol{0}}(\mathbf{s}) = \sum_{\boldsymbol{\alpha}_K} y_{\boldsymbol{\alpha}_K, \boldsymbol{0}} \Psi_{\boldsymbol{\alpha}_K}(\mathbf{s})
\end{equation}
Subsequently, the conditional variance, which characterizes the local uncertainty magnitude at a given $\mathbf{s}$, is obtained by summing the squares of all higher-order coefficient fields:
\begin{equation}
	\mathrm{Var} \left[ \mathcal{M}^{PC}(\boldsymbol{\xi}|\mathbf{s}) \right]= \sum_{\boldsymbol{\alpha}_L \ne \boldsymbol{0}} c_{\boldsymbol{\alpha}_L}^{2}(\mathbf{s}),\label{SCvar}
\end{equation}

These expressions demonstrate that both the mean and variance are no longer static scalars, but are expressed as continuous, analytical functions of the conditioning variables $\mathbf{s}$.

\subsection{Analytical extraction of conditional Sobol' indices}

In the framework of the analytical conditional PCE, Sobol' sensitivity indices can be derived directly from the parametric coefficient fields $c_{\boldsymbol{\alpha}_L}(\mathbf{s})$, completely bypassing the need for additional model evaluations or sampling.

To obtain the conditional sensitivity distribution, the variables $\mathbf{s}$ are treated as conditioning parameters and are excluded from the functional decomposition. We focus on the sensitivity of the response with respect to the random input vector $\boldsymbol{\xi} = \{\xi_1, \dots, \xi_L\}$. For any non-empty subset of indices $\mathbf{u} \subset \{1, \dots, L\}$, we define the associated conditional multi-index set:

\begin{equation}
\mathcal{A}_{\mathbf{u}, L}^{M, p} = \{ \boldsymbol{\alpha}_L \in \mathbb{N}^L : \alpha_k \neq 0 \Longleftrightarrow k \in \mathbf{u}, \quad \text{and } (\boldsymbol{\alpha}_K, \boldsymbol{\alpha}_L) \in \mathcal{A}^{M, p}\}
\end{equation}

Based on these definitions, the analytical conditional Sobol' decomposition is formalized in the following lemma.

\begin{lemma}[Analytical conditional Sobol' decomposition]\label{lem:scsobol}
	Given the conditional PCE model $\mathcal{M}^{PC}(\boldsymbol{\xi}|\mathbf{s})$, for any fixed value of $\mathbf{s}$, there exists a unique Sobol' decomposition with respect to the stochastic input vector $\boldsymbol{\xi}$:
	\begin{equation}
		\mathcal{M} ^{PC}\left( \boldsymbol{\xi }|\mathbf{s} \right) = \mathcal{M} ^{PC}_{\boldsymbol{0}}(\mathbf{s})+\sum_{ \mathbf{u}\ne \emptyset}{\mathcal{M} ^{PC}_{\mathbf{u}}\left( \boldsymbol{\xi}_{\mathbf{u}}|\mathbf{s} \right)}
	\end{equation}
	where $\mathcal{M}^{PC}_{\boldsymbol{0}}(\mathbf{s})$ is the conditional mean, and each component $\mathcal{M}_{\mathbf{u}}$ is represented by the corresponding $\text{PCE}$ terms:
	\begin{equation}
		\mathcal{M} ^{PC} _{\mathbf{u}}\left( \boldsymbol{\xi }_{\mathbf{u}}|\mathbf{s} \right) = \sum_{\boldsymbol{\alpha }_L\in \mathcal{A} _{\mathbf{u},L}^{M,p}}{c_{\boldsymbol{\alpha }_L}(\mathbf{s})\Psi _{\boldsymbol{\alpha }_L}\left( \boldsymbol{\xi } \right)}
	\end{equation}
	
	Under the orthonormality of the polynomial chaos basis $\Psi_{\boldsymbol{\alpha}_L}$, the conditional Sobol' indices are obtained analytically from the coefficient fields as:
	
	\begin{enumerate}
		
		\item {First-order sensitivity index}
		\begin{equation}
			S_{i|\mathbf{s}} 
			= \frac{\sum_{\boldsymbol{\alpha}_L \in \mathcal{A}_{\mathbf{u},L}^{M,p}, \ \mathbf{u} = \{i\}} 
				c_{\boldsymbol{\alpha}_L}^2(\mathbf{s})}
			{\sum_{\substack{\boldsymbol{\alpha}_L \in \mathcal{A}^{M,p} \\ \boldsymbol{\alpha}_L \neq \boldsymbol{0}}} 
				c_{\boldsymbol{\alpha}_L}^2(\mathbf{s})}
		\end{equation}
		
		\item {Interaction sensitivity index}
		\begin{equation}
			S_{\mathbf{u}|\mathbf{s}} 
			= \frac{\sum_{\boldsymbol{\alpha}_L \in \mathcal{A}_{\mathbf{u},L}^{M,p}} 
				c_{\boldsymbol{\alpha}_L}^2(\mathbf{s})}
			{\sum_{\substack{\boldsymbol{\alpha}_L \in \mathcal{A}^{M,p} \\ \boldsymbol{\alpha}_L \neq \boldsymbol{0}}} 
				c_{\boldsymbol{\alpha}_L}^2(\mathbf{s})}
		\end{equation}
		
		\item {Total sensitivity index}
		\begin{equation}
			S_{i|\mathbf{s}}^{T}
			= \frac{\sum_{\substack{\boldsymbol{\alpha}_L \in \mathcal{A}^{M,p} \\ \alpha_i \neq 0}} 
				c_{\boldsymbol{\alpha}_L}^2(\mathbf{s})}
			{\sum_{\substack{\boldsymbol{\alpha}_L \in \mathcal{A}^{M,p} \\ \boldsymbol{\alpha}_L \neq \boldsymbol{0}}} 
				c_{\boldsymbol{\alpha}_L}^2(\mathbf{s})}
			\label{sensi}
		\end{equation}
	\end{enumerate}
	
\end{lemma}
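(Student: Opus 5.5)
The plan is to fix $\mathbf{s}$ and treat $\mathcal{M}^{PC}(\boldsymbol{\xi}|\mathbf{s})$ as an ordinary truncated PCE in $\boldsymbol{\xi}$ whose coefficients are the numbers $c_{\boldsymbol{\alpha}_L}(\mathbf{s})$. Lemma~\ref{lem:ortho} supplies both the expansion and the orthonormality of $\{\Psi_{\boldsymbol{\alpha}_L}\}$ with respect to $f_{\boldsymbol{\xi}}$. The last point needs some care. Because the components of $\boldsymbol{X}=(\mathbf{s},\boldsymbol{\xi})$ are mutually independent, the conditional law of $\boldsymbol{\xi}$ given $\mathbf{s}$ is just $f_{\boldsymbol{\xi}}=\prod_{i} f_{\xi_i}$. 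Each $\Psi_{\boldsymbol{\alpha}_L}$ is a product of one-dimensional orthonormal $\phi_{\alpha_i}$, so the conditional inner product factorizes into univariate inner products equal to $\delta_{\alpha_i\beta_i}$. This is the only place where independence is used, and I would state it explicitly. From here the argument reduces to the classical PCE-based Sobol' analysis of Section~2.3, carried out pointwise in $\mathbf{s}$.

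\textbf{Step 1 (existence of the decomposition).} Partition the nonzero multi-indices $\boldsymbol{\alpha}_L$ of the truncated set by their support $\mathbf{u}=\{k:\alpha_k\neq 0\}$. These sets are exactly the $\mathcal{A}^{M,p}_{\mathbf{u},L}$, and they are disjoint and exhaustive. Set $\mathcal{M}^{PC}_{\boldsymbol{0}}(\mathbf{s})=c_{\boldsymbol{0}}(\mathbf{s})$ and define $\mathcal{M}^{PC}_{\mathbf{u}}$ as in the statement. Summing over $\mathbf{u}$ then gives the claimed representation. Each $\mathcal{M}^{PC}_{\mathbf{u}}$ depends only on $\boldsymbol{\xi}_{\mathbf{u}}$, because $\phi_0\equiv 1$ on every coordinate outside $\mathbf{u}$.

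\textbf{Step 2 (Sobol' conditions and uniqueness).} I must verify the Hoeffding--Sobol' zero-mean condition $\int \mathcal{M}^{PC}_{\mathbf{u}}\, f_{\xi_k}\,d\xi_k=0$ for every $k\in\mathbf{u}$. This holds because every term contains a factor $\phi_{\alpha_k}(\xi_k)$ with $\alpha_k\ge 1$, and $\int \phi_{\alpha_k}f_{\xi_k}d\xi_k=\langle\phi_{\alpha_k},\phi_0\rangle=0$. Uniqueness then follows from the standard Sobol' theorem for square-integrable functions of independent inputs, applied for fixed $\mathbf{s}$. The same zero-mean property also gives mutual orthogonality of the components, $\mathbb{E}[\mathcal{M}_{\mathbf{u}}\mathcal{M}_{\mathbf{v}}]=0$ for $\mathbf{u}\neq\mathbf{v}$, since their index sets are disjoint.

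\textbf{Step 3 (variances and indices).} Use Parseval with the conditional orthonormality. This gives $D_{\mathbf{u}}(\mathbf{s})=\mathrm{Var}[\mathcal{M}^{PC}_{\mathbf{u}}|\mathbf{s}]=\sum_{\boldsymbol{\alpha}_L\in\mathcal{A}_{\mathbf{u},L}^{M,p}}c_{\boldsymbol{\alpha}_L}^2(\mathbf{s})$, and the total conditional variance is Eq.~\eqref{SCvar}. Dividing $D_{\mathbf{u}}$ by the total yields the interaction index, and the special case $\mathbf{u}=\{i\}$ yields the first-order index. The total index is the sum of $S_{\mathbf{v}|\mathbf{s}}$ over all $\mathbf{v}\ni i$. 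The union of $\mathcal{A}_{\mathbf{v},L}^{M,p}$ over $\mathbf{v}\ni i$ is exactly the set of $\boldsymbol{\alpha}_L$ with $\alpha_i\neq 0$, which gives Eq.~\eqref{sensi}.

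The main obstacle is not algebraic. It lies in making the conditional orthonormality rigorous and in handling the degenerate case $\sum_{\boldsymbol{\alpha}_L\neq\boldsymbol{0}}c^2_{\boldsymbol{\alpha}_L}(\mathbf{s})=0$. In that case the indices are undefined, so I would restrict the statement to those $\mathbf{s}$ where the conditional variance is positive. I would also note that the truncation constraint $(\boldsymbol{\alpha}_K,\boldsymbol{\alpha}_L)\in\mathcal{A}^{M,p}$ only affects which $\boldsymbol{\alpha}_L$ occur and plays no role in the orthogonality argument.
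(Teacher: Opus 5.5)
Your proposal is correct and follows the same route as the paper. You fix $\mathbf{s}$, partition the conditional terms by the support sets $\mathcal{A}_{\mathbf{u},L}^{M,p}$, invoke uniqueness of the Sobol' decomposition together with the orthonormality from Lemma~\ref{lem:ortho}, and normalize the sums of squared coefficient fields by the conditional variance in Eq.~\eqref{SCvar}. You are more careful than the paper's brief argument: you make explicit the use of input independence for conditional orthonormality, verify the zero-mean conditions behind uniqueness, justify the total-index set as the union over $\mathbf{v}\ni i$, and flag the degenerate case of vanishing conditional variance.
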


\begin{proof}
	The result follows from the uniqueness of the Sobol' decomposition and the orthogonality of the basis functions $\Psi_{\boldsymbol{\alpha}_L}$ established in Lemma~\ref{lem:ortho}. By partitioning the conditional terms into the index sets $\mathcal{A}_{\mathbf{u}, L}$, the model response is decomposed into mutually orthogonal components. The conditional variance contributions are then obtained as sums of the squared parametric coefficients $c_{\boldsymbol{\alpha}_L}(\mathbf{s})$. Normalization by the total conditional variance (Eq.~\ref{SCvar}) yields the analytical sensitivity indices.
\end{proof}

\section{Numerical illustration: analytical benchmark for conditional sensitivity}
To demonstrate the capability of the proposed framework in capturing varying uncertainty, we construct a synthetic parametric stochastic model. This model serves as an analytical benchmark to verify how the conditional PCE framework decomposes total uncertainty into continuous, spatially-resolved sensitivity maps.

\subsection{Problem setup and analytical reference}
Consider a physical quantity $\mathcal{M}$ defined over a two-dimensional domain $\mathbf{s} = (x, y) \in [0, 1]^2$. The system is driven by two independent and identically distributed standard normal variables $\xi_1, \xi_2 \sim \mathcal{N}(0, 1)$. The response surface is explicitly defined to provide a ground truth for comparison:
\begin{equation}
	\mathcal{G}(\mathbf{s}, \xi_1, \xi_2) = g_0(\mathbf{s}) + g_1(\mathbf{s})\xi_1 + g_2(\mathbf{s})\xi_2 + g_{12}(\mathbf{s})\xi_1\xi_2
\end{equation}
where the parametric shaping functions are prescribed as:
\begin{equation}
	g_i(\mathbf{s}) =
	\begin{cases}
		\sin(\pi x)\cos(\pi y), & i=0,\\
		0.8\,\sin(\pi x)\cos(\pi y), & i=1,\\
		0.6\,\cos(2\pi x)\sin(\pi y), & i=2,\\
		0.4\,\sin(\pi x)\sin(\pi y), & i=12
	\end{cases}
\end{equation}

Given the independence and zero-mean property of $\xi_i$, the stochastic terms are mutually orthogonal by construction. Consequently, the analytical conditional variance $\sigma^2(\mathbf{s})$ is exactly the sum of squares of the coefficient fields:
\begin{equation}
	\sigma^2(\mathbf{s}) = \text{Var}[\mathcal{G} | \mathbf{s}] = g_1^2(\mathbf{s}) + g_2^2(\mathbf{s}) + g_{12}^2(\mathbf{s})
\end{equation}

Based on this decomposition, the conditional Sobol' indices admit closed-form analytical expressions, which will be used to validate the PCE-extracted results. The first-order and interaction indices are:
\begin{equation}
	S_{1}(\mathbf{s}) = \frac{g_1^2(\mathbf{s})}{\sigma^2(\mathbf{s})}, \quad
	S_{2}(\mathbf{s}) = \frac{g_2^2(\mathbf{s})}{\sigma^2(\mathbf{s})}, \quad
	S_{12}(\mathbf{s}) = \frac{g_{12}^2(\mathbf{s})}{\sigma^2(\mathbf{s})}
\end{equation}

The corresponding total-effect indices are defined as:
\begin{equation}
	S_{1}^{T}(\mathbf{s}) = S_1(\mathbf{s}) + S_{12}(\mathbf{s}), \quad
	S_{2}^{T}(\mathbf{s}) = S_2(\mathbf{s}) + S_{12}(\mathbf{s})
\end{equation}

As shown in Fig.~\ref{fig:sobol_analysis}, the variance decomposition is complete ($S_1 + S_2 + S_{12} = 1$), meaning the stochastic response is fully characterized by four continuous fields: the conditional variance $\sigma^2(\mathbf{s})$ (Fig.~\ref{fig:sobol:var}), the first-order indices $S_1(\mathbf{s})$ and $S_2(\mathbf{s})$ (Figs.~\ref{fig:sobol:s1} and \ref{fig:sobol:s2}), and the interaction index $S_{12}(\mathbf{s})$ (Fig.~\ref{fig:sobol:s12}). These analytical maps provide the exact reference to assess the accuracy of the proposed analytical extraction method.

\begin{figure}[h]
	\centering
	\begin{subfigure}[b]{0.45\textwidth}
		\centering
		\includegraphics{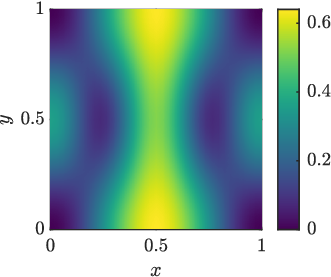}
		\caption{Total variance field $\sigma^2(\mathbf{s})$}
		\label{fig:sobol:var}
	\end{subfigure}
	\hfill
	\begin{subfigure}[b]{0.45\textwidth}
		\centering
		\includegraphics{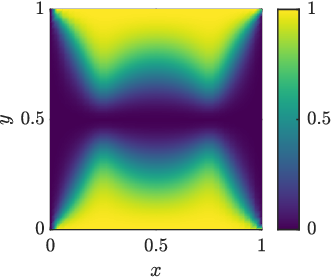}
		\caption{First-order index $S_1(\mathbf{s})$}
		\label{fig:sobol:s1}
	\end{subfigure}
	\vspace{1em}
	\begin{subfigure}[b]{0.45\textwidth}
		\centering
		\includegraphics{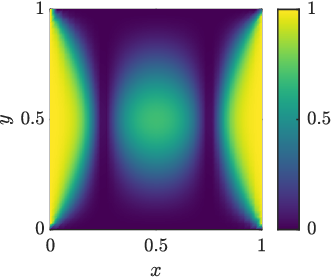}
		\caption{First-order index $S_2(\mathbf{s})$}
		\label{fig:sobol:s2}
	\end{subfigure}
	\hfill
	\begin{subfigure}[b]{0.45\textwidth}
		\centering
		\includegraphics{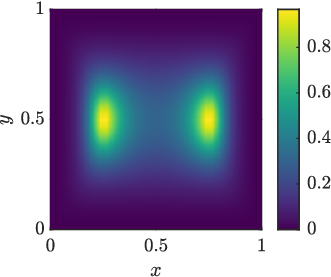}
		\caption{Interaction index $S_{12}(\mathbf{s})$}
		\label{fig:sobol:s12}
	\end{subfigure}
	\caption{Analytical spatially correlated variance and Sobol' sensitivity indices}
	\label{fig:sobol_analysis}
\end{figure}

\subsection{Performance benchmarking and error analysis}
To evaluate the performance of the proposed analytical conditional framework, its computational efficiency and accuracy are benchmarked against two representative strategies: point-wise Monte Carlo (MC) estimation \cite{sudret2007uncertainty} and point-wise PCE-based analysis \cite{salehi2018flow}. The physical domain is discretized into a $35 \times 35$ grid, resulting in $1{,}225$ spatial evaluation points.

For a fair comparison, all methods utilize an identical sample size of $N=500$, ensuring that performance differences arise solely from the surrogate modeling and sensitivity propagation strategies rather than the raw data volume.

To mimic realistic high-fidelity simulation outputs (e.g., CFD), a spatially varying Gaussian noise field $\varepsilon(\mathbf{s})$ is introduced into the analytical response:
\begin{equation}
	\mathcal{G}_{\text{noisy}}(\mathbf{s}, \boldsymbol{\xi})
	= \mathcal{G}(\mathbf{s}, \boldsymbol{\xi})
	+ \varepsilon(\mathbf{s}),
\end{equation}
where the noise magnitude is scaled by the local standard deviation and a Gaussian decay kernel:
\begin{equation}
	\varepsilon(\mathbf{s})
	= \sigma \cdot \mathrm{std}(\mathcal{G})
	\cdot \mathcal{N}(0,1)
	\cdot \exp\left(-\frac{(x-0.5)^2 + (y-0.5)^2}{2\sigma_n^2}\right).
\end{equation}

The admissible polynomial order is constrained by the available samples $N$ such that $P = \binom{M+p}{p} \leq N$. For the point-wise PCE formulation (where the spatial variables are fixed, thus $M=2$), this allows for a high order of $p\leq 30$. In contrast, for the proposed joint spatio-parametric formulation ($M=4$), the maximum order is truncated at $p\leq 8$ to manage the combinatorial growth of the basis set.

Despite the lower polynomial order in the joint space, the proposed method offers distinct computational advantages. While the point-wise PCE requires the construction of $1{,}225$ independent surrogates (one for each grid point), the proposed method constructs a single, unified model over the augmented space $\boldsymbol{X} = (\mathbf{s}, \boldsymbol{\xi})^\mathrm{T}$.

\subsection{Results and discussion}

The spatial sensitivity fields and the corresponding approximation errors $\epsilon(\mathbf{s}) = |S_{12}(\mathbf{s}) - S_{12}^{\text{true}}(\mathbf{s})|$ are illustrated in Fig.~\ref{fig:method_comparison}.

As observed, both the MC and point-wise PCE approaches yield sensitivity maps characterized by pronounced spatially uncorrelated fluctuations. This noise arises because each spatial location is treated as an isolated entity; the methods lack a mechanism to enforce the underlying physical continuity of the field. Consequently, the resulting $S_{12}$ fields are not strictly continuous, and the error distributions appear scattered.

In contrast, the proposed analytical conditional PCE produces smooth, spatially coherent sensitivity maps. By embedding the spatial coordinates $\mathbf{s}$ directly into the orthonormal basis functions $\Psi_{\boldsymbol{\alpha}_K}(\mathbf{s})$, the framework inherently applies a spatial regularization effect. This ensures that the extracted Sobol' indices inherit the analytical continuity of the basis functions, effectively filtering out pointwise numerical noise and providing a more robust characterization of the stochastic field.

In terms of quantitative accuracy under identical sample sizes, the MC-based reconstruction exhibits the largest discrepancies, with the error field $\epsilon(\mathbf{s})$ fluctuating significantly within $[0, 1.5 \times 10^{-1}]$. While the point-wise PCE method successfully reduces the peak error magnitude to approximately $1.1 \times 10^{-2}$, the resulting error field remains spatially unstructured. This lack of coherence stems from the independent construction of surrogate models at each spatial location, which fails to filter out pointwise numerical noise.

In contrast, the proposed analytical conditional PCE yields smooth and physically consistent sensitivity fields by explicitly exploiting the joint spatio-parametric structure of the response. The reconstructed $S_{12}$ field successfully preserves the inherent spatial coherence of the analytical solution.

Although the proposed method introduces a minor bias ($\epsilon(\mathbf{s}) \approx 1.75 \times 10^{-2}$) due to the global regression process, this error is spatially structured rather than random. This characteristic indicates that the framework acts as a global filter, trading off a marginal increase in local peak error for a significant gain in overall field stability and physical consistency. Ultimately, the proposed framework enhances robustness against sampling noise and ensures that the sensitivity analysis remains stable across the entire parametric domain, making it more suitable for high-fidelity engineering applications like CFD.

\begin{figure}[htp]
	\centering
	\begin{subfigure}[b]{0.45\textwidth}
		\centering
		\includegraphics{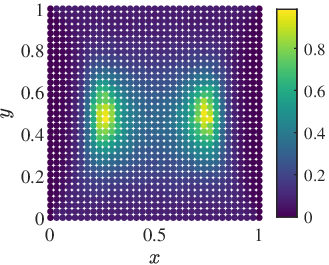}
		\caption{$S_{12}$ (MC)}
		\label{fig:mc:s12}
	\end{subfigure}
	\hfill
	\begin{subfigure}[b]{0.45\textwidth}
		\centering
		\includegraphics{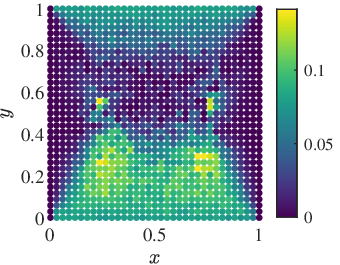}
		\caption{Error $\epsilon$ (MC)}
		\label{fig:mc:err}
	\end{subfigure}
	\vspace{1em}
	\begin{subfigure}[b]{0.45\textwidth}
		\centering
		\includegraphics{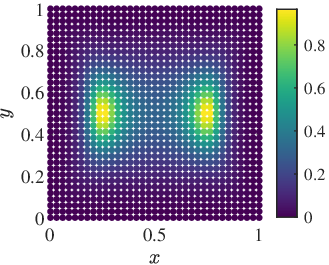}
		\caption{$S_{12}$ (Point-wise PCE)}
		\label{fig:pwpce:s12}
	\end{subfigure}
	\hfill
	\begin{subfigure}[b]{0.45\textwidth}
		\centering
		\includegraphics{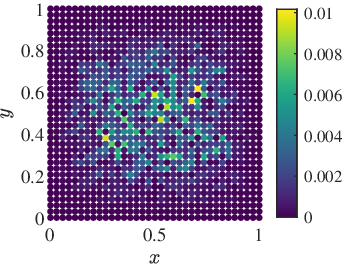}
		\caption{Error $\epsilon$ (Point-wise PCE)}
		\label{fig:pwpce:err}
	\end{subfigure}
	\vspace{1em}
	\begin{subfigure}[b]{0.45\textwidth}
		\centering
		\includegraphics{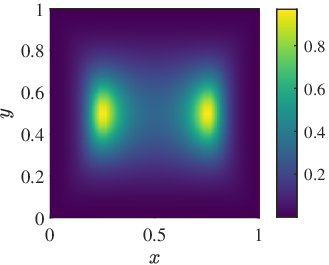}
		\caption{$S_{12}$ (SCPCE)}
		\label{fig:scpce:s12}
	\end{subfigure}
	\hfill
	\begin{subfigure}[b]{0.45\textwidth}
		\centering
		\includegraphics{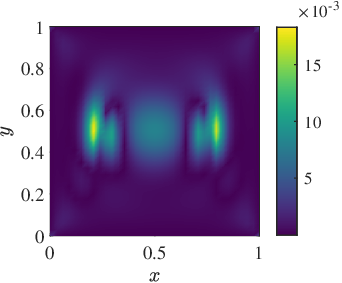}
		\caption{Error $\epsilon$ (SCPCE)}
		\label{fig:scpce:err}
	\end{subfigure}
	
	\caption{Comparison of spatial distributions of the interaction index $S_{12}$ and the corresponding error fields for different methods.}
	\label{fig:method_comparison}
\end{figure}

\section{Conclusion and Future Work}

\subsection{Conclusion}
In this paper, we have presented a unified framework for the analytical extraction of conditional Sobol' indices from a global Polynomial Chaos Expansion (PCE). By treating the spatial coordinates (or more generally, any conditioning parameters) $\mathbf{s}$ and the stochastic variables $\boldsymbol{\xi}$ as a joint augmented input vector, we have derived a series of analytical lemmas that allow for the direct computation of spatially-varying sensitivity maps.

The key contributions and findings are summarized as follows:

\begin{itemize}
\item We proved that through basis factorization, the influence of conditioning variables can be entirely encapsulated within parametric coefficient fields $c_{\boldsymbol{\alpha}_L}(\mathbf{s})$, enabling the extraction of conditional moments and Sobol' indices without the need for repetitive point-wise modeling or sampling.

\item Unlike traditional point-wise approaches (MC or PCE) that yield noisy, uncorrelated sensitivity fields, the proposed method inherits the analytical continuity of the underlying basis functions. This provides a "global filtering" effect that ensures physical consistency and robustness against simulation noise.

\item The proposed method constructs a single unified surrogate, significantly reducing the overhead associated with managing thousands of independent local models, while maintaining high quantitative accuracy.
\end{itemize}

\subsection{Future Work}
Despite the advantages of the current framework, several challenges remain for large-scale engineering applications:

\begin{itemize}
\item When the number of spatial discrete points $\mathbf{s}$ is extremely large (e.g., millions of grid cells in high-fidelity CFD), the augmented regression becomes computationally intensive. Future research will focus on dimensionality reduction of the spatial basis or the use of domain decomposition to alleviate the memory burden during coefficient estimation.

\item Current sparse regression techniques (such as OMP or LASSO) lack robust, spatially-aware error metrics to guide the selection of joint spatio-parametric terms. Developing localized error estimators that can adaptively trigger basis enrichment would be crucial for accelerating the convergence of high-dimensional conditional models.

\item Extending the current analytical framework to handle non-stationary or non-Gaussian conditioning fields will further broaden its applicability to multi-scale and multi-physics problems.
\end{itemize}

\bibliographystyle{unsrt}  
\bibliography{bib}  

\end{document}